\newtheorem{theorem}{Theorem}[section]
\newtheorem{lemma}[theorem]{Lemma}
\newtheorem{definition}{Definition}[section]
\title{RAN-GNNs: breaking the capacity limits of graph neural networks}
\author{Diego Valsesia,~\IEEEmembership{Member,~IEEE},
        Giulia Fracastoro,~\IEEEmembership{Member,~IEEE},
        Enrico Magli,~\IEEEmembership{Fellow,~IEEE} 
\thanks{Diego Valsesia and Giulia Fracastoro contributed equally to this work. The authors are with Politecnico di Torino - Department of Electronics and Telecommunications,  Italy.  Email:{name.surname}@polito.it.}  }
\begin{document}
\maketitle

\begin{abstract}
Graph neural networks have become a staple in problems addressing learning and analysis of data defined over graphs. However, several results suggest an inherent difficulty in extracting better performance by increasing the number of layers. Recent works attribute this to a phenomenon peculiar to the extraction of node features in graph-based tasks, i.e., the need to consider multiple neighborhood sizes at the same time and adaptively tune them. In this paper, we investigate the recently proposed randomly wired architectures in the context of graph neural networks. Instead of building deeper networks by stacking many layers, we prove that employing a randomly-wired architecture can be a more effective way to increase the capacity of the network and obtain richer representations. We show that such architectures behave like an ensemble of paths, which are able to merge contributions from receptive fields of varied size. Moreover, these receptive fields can also be modulated to be wider or narrower through the trainable weights over the paths. We also provide extensive experimental evidence of the superior performance of randomly wired architectures over multiple tasks and four graph convolution definitions, using recent benchmarking frameworks that addresses the reliability of previous testing methodologies.
\end{abstract}

\section{Introduction} \label{sec:intro}

\begin{figure*}[t]
    \centering
    \includegraphics[width=0.9\textwidth]{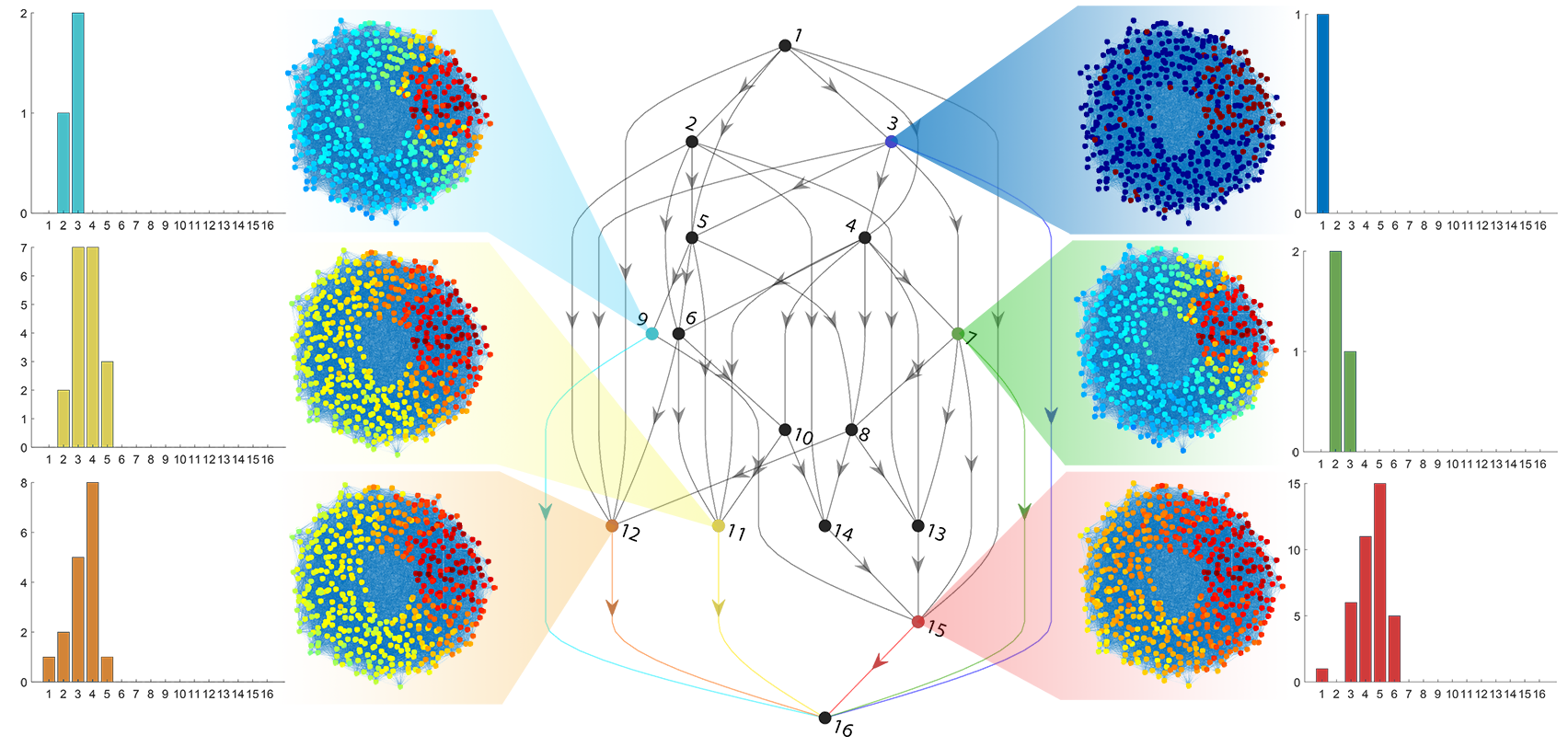}
    \caption{Random architectures aggregate ensembles of paths. This creates a variety of receptive fields (effective neighborhood sizes on the domain graph) that are combined to compute the output. Figure shows the domain graph where nodes are colored (red means high weight, blue low weight) according to the receptive field weighted by the path distribution of a domain node. The receptive field is shown at all the architecture nodes directly contributing to the output. Histograms represent the distribution of path lengths from source to architecture node.}
    \label{fig:full}
\end{figure*}

Data defined over the nodes of graphs are ubiquitous. Social network profiles \cite{hamilton2017inductive}, molecular interactions \cite{duvenaud2015convolutional}, citation networks \cite{sen2008collective}, 3D point clouds \cite{simonovsky2017dynamic} are just examples of a wide variety of data types where describing the domain as a graph allows to encode constraints and patterns among the data points. Exploiting the graph structure is crucial in order to extract powerful representations of the data. However, this is not a trivial task and only recently graph neural networks (GNNs) have started showing promising approaches to the problem. GNNs  \cite{wu2020comprehensive} extend the deep learning toolbox to deal with the irregularity of the graph domain. Much of the work has been focused on defining a graph convolution operation \cite{bronstein2017geometric}, i.e., a layer that is well-defined over the graph domain but also retains some of the key properties of convolution such as weight reuse and locality.
A wide variety of such graph convolution operators has been defined over the years, mostly based on neighborhood aggregation schemes where the features of a node are transformed by processing the features of its neighbors. Such schemes have been shown to be as powerful as the Weisfeiler-Lehman graph isomorphism test \cite{weisfeiler,xu2018powerful}, enabling them to simultaneuosly learn data features and graph topology.

However, contrary to classic literature on CNNs, few works \cite{li2019deepgcnsjournal,dehmamy2019understanding,xu2018representation,dwivedi2020benchmarking} addressed GNNs architectures and their role in extracting powerful representations. Several works, starting with the early GCN \cite{kipf2016semi}, noticed an inability to build deep GNNs, often resulting in worse performance than that of methods that disregard the graph domain, when trying to build anything but very shallow networks. This calls for exploring whether advances on CNN architectures can be translated to the GNN space, while understanding the potentially different needs of graph representation learning.

Li et al. \cite{li2019deepgcns} suggest that GCNs suffer from oversmoothing as several layers are stacked, resulting in the extraction of mostly low-frequency features. This is related to the lack of self-loop information in this specific graph convolution. It is suggested that ResNet-like architectures mitigate the problem as the skip connections supply high frequency contributions. Xu et al. \cite{xu2018representation} point out that the size of the receptive field of a node, i.e., which nodes contribute to the features of the node under consideration, plays a crucial role, but it can vary widely depending on the graph and too large receptive fields may actually harm performance. They conclude that for graph-based problems it would be optimal to learn how to adaptively merge contributions from receptive fields of multiple size. For this reason they propose an architecture where each layer has a skip connection to the output so that contributions at multiple depths (hence sizes of receptive fields) can be merged.  Nonetheless, the problem of finding methods for effectively increasing the capacity of graph neural networks is still standing, since stacking many layers has been proven to provide limited improvements \cite{li2019deepgcns, oono2019graph, alon2020bottleneck, nt2019revisiting}.

In this paper, we argue that the recently proposed randomly wired architectures \cite{xie2019exploring} are ideal for GNNs. In a randomly wired architecture, ``layers'' are arranged according to a random directed acyclic graph and data are propagated through the paths towards the output. Such architecture is ideal for GNNs because it realizes the intuition of \cite{xu2018representation} of being able of merging receptive fields of varied size. Indeed, the randomly wired GNNs (RAN-GNNs) can be seen as an extreme generalization of their jumping network approach where layer outputs can not only jump to the network output but to other layers as well, continuously merging receptive fields. Hence, randomly wired architectures provide a way of effectively scaling up GNNs, mitigating the depth problem and creating richer representations. Fig. \ref{fig:full} shows a graphical representation of this concept by highlighting the six layers directly contributing to the output, having different receptive fields induced by the distribution of paths from the input.

Our novel contributions can be summarized as follows: i) we are the first to analyze randomly wired architectures and show that they are generalizations of ResNets when looked at as ensembles of paths \cite{veit2016residual}; ii) we show that path ensembling allows to merge receptive fields of varied size and that it can do so \textit{adaptively}, i.e., trainable weights on the architecture edges can tune the desired size of the receptive fields to be merged to achieve an optimal configuration for the problem; iii) we introduce improvements to the basic design of randomly wired architectures by optionally embedding a path that sequentially goes through all layers in order to promote larger receptive fields when needed, and by presenting MonteCarlo DropPath, which decorrelates path contributions by randomly dropping architecture edges; iv) we provide extensive experimental evidence, using recently introduced benchmarking frameworks \cite{dwivedi2020benchmarking, hu2020ogb} to ensure significance and reproducibility, that randomly wired architectures consistently outperform ResNets, often by large margins, for four of the most popular graph convolution definitions on multiple tasks.

\section{Background}

\subsection{Graph Neural Networks}

A major shortcoming of CNNs is that they are unable to process data defined on irregular domains. In particular, one case that is drawing attention is when the data structure can be described by a graph and the data are defined as vectors on the graph nodes. This setting can be found in many applications, including 3D point clouds \cite{wang2019dynamic,valsesia2018learning}, computational biology \cite{alipanahi2015predicting, duvenaud2015convolutional}, and social networks \cite{kipf2016semi}. However, extending CNNs from data with a regular structure, such as images and video, to graph-structured data is not straightforward if one wants to preserve useful properties such as locality and weight reuse.

GNNs redefine the convolution operation so that the new layer definition can be used on domains described by graphs. The most widely adopted graph convolutions in the literature rely on message passing, where a weighted aggregation of the feature vectors in a neighborhood is computed. The GCN \cite{kipf2016semi} is arguably the simplest definition, applying the same linear transformation to all the node features, followed by neighborhood aggregation and non-linear activation:
\begin{align*}
    \mathbf{h}^{(l+1)}_i = \sigma\left( \frac{1}{\vert \mathcal{N}_i \vert} \sum_{j \in \mathcal{N}_i} \mathbf{W}\mathbf{h}^{(l)}_j \right).
\end{align*}
Variants of this definition have been developed, e.g., GraphSage \cite{hamilton2017inductive} concatenates the feature vector of node $i$ to the feature vectors of its neighbors, so that self-information can also be exploited; GIN \cite{xu2018powerful} uses a multilayer perceptron instead of a linear transform, replaces average with sum to ensure injectivity and proposes a different way of computing the output by using all the feature vectors produced by the intermediate layers. These definitions are all isotropic because they treat every edge in the same way. It has been observed that better representation capacity can be achieved using anistropic definitions, where every edge can have a different transformation, at the cost of increased computational complexity. The Gated GCN \cite{bresson2017residual} and GAT \cite{velivckovic2017graph} definitions fall in this category.

\subsection{Randomly wired architectures}

In recent work, Xie et al. \cite{xie2019exploring} explore whether it is possible to avoid handcrafted design of neural network architectures and, at the same time, avoid expensive neural architecture search methods \cite{elsken2019neural}, by designing random architecture generators. They show that ``layers'' performing convolution, normalization and non-linear activation can be connected in a random architecture graph. Strong performance is observed on the traditional image classification task by outperforming state-of-the-art architectures. The authors conjecture that random architectures generalize ResNets and similar constructions, but the underlying principles of their excellent performance are unclear, as well as whether the performance translates to tasks other than image recognition or to operations other than convolution on grids.

\section{Randomly wired GNNs}

In this section, we first introduce randomly wired graph neural networks (RAN-GNNs) and the notation we are going to use. We then analyze their behavior when viewed as ensembles of paths.

\begin{figure}
  \centering
  \includegraphics[width=0.25\textwidth]{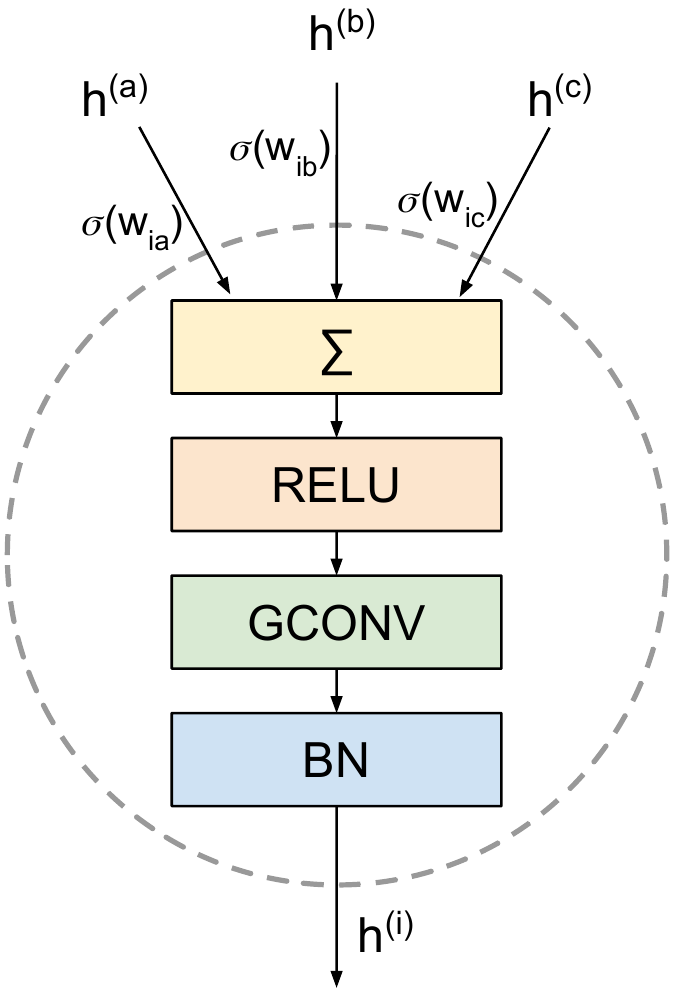}
    \caption{An architecture node is equivalent to a GNN layer.}
  \label{fig:architecture_node}
\end{figure}

A randomly wired architecture consists of a directed acyclic graph (DAG) connecting a source architecture node, which is fed with the input data, to a sink architecture node. One should not confuse the architecture DAG with the graph representing the GNN domain: to avoid any source of confusion we will use the terms \textit{architecture nodes} (edges) and \textit{domain nodes} (edges), respectively. A domain node is a node of the graph that is fed as input to the GNN. An architecture node is effectively a GNN layer performing the following operations (Fig. \ref{fig:architecture_node}): i) aggregation of the inputs from other architecture nodes via a weighted sum as in \cite{xie2019exploring}:
\begin{align}
    \mathbf{h}^{(i)} = \sum_{j \in \mathcal{A}_i} \omega_{ij} \mathbf{h}^{(j)} = \sum_{j \in \mathcal{A}_i} \sigma(w_{ij}) \mathbf{h}^{(j)} , \quad i=1,...,L-1
    \label{eq:aggr}
\end{align}
being $\sigma$ a sigmoid function, $\mathcal{A}_i$ the set of direct predecessors of the architecture node $i$, and $w_{ij}$ a scalar trainable weight; ii) a non-linear activation; iii) a graph-convolution operation (without output activation); iv) batch normalization.

The architecture DAG is generated using a random graph generator. In this paper, we will focus on the Erd\H{o}s-Renyi model where the adjacency matrix of the DAG is a strictly upper triangular matrix with entries being realizations of a Bernoulli random variable with probability $p$. If multiple input architecture nodes are randomly generated, they are all wired to a single global input. Multiple output architecture nodes are averaged to obtain a global output. Other random generators may be used, e.g., small-world and scale-free random networks have been studied in \cite{xie2019exploring}. However, a different generator will display a different behavior concerning the properties we study in Sec. \ref{sec:gradient_analysis}. 

\subsection{Randomly wired architectures behave like path ensembles}
\label{sec:gradient_analysis}

It has already been shown that ResNets behave like ensembles of relatively shallow networks, where one can see the ResNet architecture as a collection of paths of varied lengths \cite{veit2016residual}. More specifically, in a ResNet with $n$ layers, where all layers have a skip connection except the first one and the last one, there are exactly $2^{L-2}$ paths, whose lengths follow a Binomial distribution (i.e., the number of paths of length $l$ from layer $k$ to the last layer is $\binom{L-k-1}{l-2}$), and the average path length is $\frac{L}{2}+1$ \cite{veit2016residual}. In this section, we show that a randomly wired neural network can also be considered as an ensemble of networks with varied depth. However, in this case, the distribution of the path length is different from the one obtained with the ResNet, as shown in the following lemma.
\begin{lemma}
Let us consider a randomly wired network with $L$ architecture nodes, where the architecture DAG is generated according to an Erd\H{o}s-Renyi graph generator with probability $p$. The average number of paths of length $l$ from node $k$ to the sink, where $k<L$, is $\mathbb{E}[N_l^{(k)}]=\binom{L-k-1}{l-2}p^{l-1}$ and the average total number of paths from node $k$ to the sink is $\mathbb{E}[N^{(k)}]=p(1+p)^{L-k-1}$. 
\label{lemma1}
\end{lemma}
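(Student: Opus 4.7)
The plan is to count paths combinatorially via linearity of expectation on the random DAG. Since the adjacency matrix is strictly upper triangular with independent Bernoulli$(p)$ entries, a directed edge $i \to j$ exists (with probability $p$, independently) precisely when $i < j$. Following the convention used for the ResNet comparison earlier in the section, I read ``path of length $l$ from $k$ to the sink $L$'' as a sequence of $l$ ordered nodes $k = v_0 < v_1 < \cdots < v_{l-1} = L$; such a path uses $l-1$ edges and has $l-2$ strictly intermediate nodes, which must be drawn from $\{k+1,\ldots,L-1\}$, a set of size $L-k-1$.

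First I would compute $\mathbb{E}[N_l^{(k)}]$. For each of the $\binom{L-k-1}{l-2}$ possible choices of intermediate nodes, the indicator that the corresponding path is realized equals the product of its $l-1$ edge indicators; by independence of the edges this expectation is $p^{l-1}$. Summing these indicators and applying linearity of expectation gives $\mathbb{E}[N_l^{(k)}] = \binom{L-k-1}{l-2} p^{l-1}$. Along the way I would note the admissible range $2 \le l \le L-k+1$, with $l=2$ corresponding to the single direct edge $k \to L$ and $l=L-k+1$ to the fully saturated path visiting every node from $k$ to $L$.

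For the total, summing over all admissible lengths and reindexing $m = l-2$ yields $\mathbb{E}[N^{(k)}] = \sum_{l=2}^{L-k+1}\binom{L-k-1}{l-2}p^{l-1} = p\sum_{m=0}^{L-k-1}\binom{L-k-1}{m}p^m$, which collapses via the binomial theorem to $p(1+p)^{L-k-1}$. No substantive obstacle is expected: the argument reduces entirely to linearity of expectation plus a binomial identity, and the only care required is to align the ``length $=$ number of nodes'' convention so that the exponents on $p$ and on the binomial coefficient match the statement.
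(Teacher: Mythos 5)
Your proposal is correct and follows essentially the same route as the paper's proof: counting increasing node sequences from $k$ to the sink, observing that each candidate path of length $l$ (in the ``number of nodes'' convention) has $l-1$ independent Bernoulli$(p)$ edges and hence realization probability $p^{l-1}$, applying linearity of expectation to get $\binom{L-k-1}{l-2}p^{l-1}$, and summing over $l$ with the binomial theorem to obtain $p(1+p)^{L-k-1}$. Your version merely makes the independence and linearity-of-expectation steps slightly more explicit than the paper does.
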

\begin{proof}
Let us first consider the number of paths of length $l$ from node $k$ to the sink. We define the path length as the number of nodes in the path.  In a randomly wired network with $n$ architecture nodes, we have that the first node of all the paths is node $k$ and the last one is node $n$ (i.e., the sink node). Therefore, the minimum path length is 2. If $l\ge 2$, the number of all possible paths of length $l$ between node $k$ and the sink is $\binom{n-k-1}{l-2}$. Since in a path of length $l$ there are $l-1$ edges and each edge has probability $p$ of being generated by the Erd\H{o}s-Renyi model, each one of the paths of length $l$ has probability $p^{l-1}$ of being present in the network. Thus, the expected number of paths with length $l$ between node $k$ and the sink is $\mathbb{E}[N_l^{(k)}]=\binom{n-k-1}{l-2}p^{l-1}$. If we set $k=1$, we obtain the average number of paths of length $l$ from source to sink $\mathbb{E}[N_l]=\binom{n-2}{l-2}p^{l-1}$. We can now compute the average total number of paths $\mathbb{E}[N^{(k)}]$ as follows
\[
\begin{split}
\mathbb{E}[N^{(k)}]&=\sum_{l=2}^{n-k+1} \binom{n-k-1}{l-2}p^{l-1} =\sum_{\tilde l=0}^{\tilde n} \binom{\tilde n}{\tilde l}p^{\tilde l+1}\\
&= p\sum_{\tilde l=0}^{\tilde n} \binom{\tilde n}{\tilde l}p^{\tilde l} =p(1+p)^{\tilde n}=p(1+p)^{n-k-1},
\end{split}
\]
where $\tilde n=n-k-1$, $\tilde l=l-2$ and the fourth equality follows from the binomial theorem. If we set $k=1$, we obtain the average total number of paths from source to sink $\mathbb{E}[N_p]=p(1+p)^{n-2}$.
\end{proof}
We can observe that if $p=1$, the randomly wired network converges to the ResNet architecture. This allows to think of randomly wired architectures as generalizations of ResNets as they enable increased flexibility in the number and distribution of paths, instead of enforcing the use of all $2^{L-2}$ paths.

\subsection{Receptive field analysis} \label{sec:receptive}

In the case of GNNs, we define the receptive field of a domain node as the neighborhood that affects the output features of that node. As discussed in Sec. \ref{sec:intro}, the work in \cite{xu2018representation} highlights that one of the possible causes of the depth problem in GNNs is that the size of the receptive field is not adaptive and may rapidly become excessively large. Inspired by this observation, in this section we analyze the receptive field of a randomly wired graph neural network. We show that the receptive field of the output is a combination of the receptive fields of shallower networks, induced by each of the paths. This allows to effectively merge the contributions from receptive fields of varied size. Moreover, we show that the trainable parameters along the path edges modulate the contributions of various path lengths and enable adaptive receptive fields, that can be tuned by the training procedure.

We first introduce a definition of the receptive field of a feedforward graph neural network\footnote{We use the term ``feedforward neural network'' to indicate an architecture made of a simple line graph, without skip connections: this is a representation of one path.}.
\begin{definition}
Given a feedforward graph neural network with $L$ layers, the receptive field of radius $L$ of a domain node is its $L$-hop neighborhood. 
\end{definition} 
In a randomly wired architecture, each path induces a corresponding receptive field whose radius depends on the length of the path. Then, the receptive field at the output of the network is obtained by combining the receptive fields of all the paths. In order to analyze the contribution of paths of different lengths to the receptive field of the network, we introduce the concept of distribution of the receptive field radius of the paths. 
Notice that if we consider a feedforward network with $L$ layers, the distribution of the receptive field radius is a delta centered in $L$.

The following lemma allows to analyze the distribution of the receptive field radius in a randomly wired architecture.
\begin{lemma}
\label{lemma:derivative}
The derivative $\frac{\partial y}{\partial x_0}$ of the output $y$ of a randomly wired architecture with respect to the input $x_0$ is
\begin{equation}\label{eq:path_sum}
\frac{\partial y}{\partial x_0}=\sum_{p\in\mathcal{P}}\frac{\partial y_p}{\partial x_0}=\sum_{p\in\mathcal{P}}\prod_{\{i,j\}\in\mathcal{E}^{p}}\omega_{ij}\frac{\partial \bar{y}_p}{\partial x_0}= \sum_{l=2}^L\sum_{p\in\mathcal{P}^l}\lambda_p\frac{\partial \bar{y}_p}{\partial x_0},
\end{equation}
where $y_p$ is the output of path $p$, $\bar{y}_p$ is the output of path $p$ when we consider all the aggregation weights equal to 1, $\lambda_p=\frac{\partial y_p}{\partial x_0}/\frac{\partial \bar{y}_p}{\partial x_0}$,  $\mathcal{P}$ is the set of all paths from source to sink, $L$ is the number of architecture nodes, $\mathcal{P}^l$ is the set of paths from source to sink of length $l$ and $\mathcal{E}^p$ is the set of edges of the path $p$.
\end{lemma}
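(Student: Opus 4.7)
The plan is to derive the displayed decomposition by unrolling the chain rule through the architecture DAG from source to sink. Since every architecture node $i$ computes $h^{(i)} = f_i(z_i)$ with $z_i = \sum_{j \in \mathcal{A}_i} \omega_{ij} h^{(j)}$, where $f_i$ denotes the composition of activation, graph convolution and batch normalization at node $i$, the partial derivative obeys the recurrence
\[
\frac{\partial h^{(i)}}{\partial x_0} = f_i'(z_i) \sum_{j \in \mathcal{A}_i} \omega_{ij} \frac{\partial h^{(j)}}{\partial x_0}.
\]
I would proceed by induction on the topological ordering of the DAG, with base case at the source.

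The inductive claim at node $i$ is that $\partial h^{(i)}/\partial x_0$ equals $\sum_{q \in \mathcal{P}_i} \bigl(\prod_{\{k,l\} \in \mathcal{E}^q} \omega_{kl}\bigr) \prod_{m \in q} f_m'(z_m)$, where $\mathcal{P}_i$ is the set of source-to-$i$ paths. Substituting the inductive hypothesis for each predecessor $j$ into the recurrence multiplies every path-to-$j$ contribution by $\omega_{ij} f_i'(z_i)$, and the union of these extended paths exhausts $\mathcal{P}_i$ exactly once, completing the step. Specializing to the sink gives
\[
\frac{\partial y}{\partial x_0} = \sum_{p \in \mathcal{P}} \Bigl(\prod_{\{k,l\} \in \mathcal{E}^p} \omega_{kl}\Bigr) \prod_{m \in p} f_m'(z_m),
\]
and identifying the $p$-th summand with $\partial y_p/\partial x_0$ (the derivative of the output when only path $p$ is active, preserving its aggregation weights) yields the first equality. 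For the middle equality, I pull the product of $\omega_{ij}$ out of each summand and read the remaining factor $\prod_m f_m'(z_m)$ as $\partial \bar{y}_p/\partial x_0$, the derivative along path $p$ with all aggregation weights set to $1$. The third equality is then a purely cosmetic regrouping of paths by length $l$, with $\lambda_p$ absorbing the $\omega$-product in accordance with its definition as the ratio $(\partial y_p/\partial x_0)/(\partial \bar{y}_p/\partial x_0)$.

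The main obstacle I anticipate is the identification $\prod_m f_m'(z_m) = \partial \bar{y}_p/\partial x_0$: if $\bar{y}_p$ is read literally as the output of the isolated path, then each $f_m'$ inside $\partial \bar{y}_p/\partial x_0$ is evaluated at the input produced by that isolated path rather than at $z_m$ in the ambient network. I would resolve this by interpreting $\bar{y}_p$ at the same operating point $z_m$ as the full network, so that the factorization $\lambda_p = \prod_{\{i,j\} \in \mathcal{E}^p} \omega_{ij}$ holds exactly; otherwise the third equality should simply be read as defining $\lambda_p$ through the stated ratio, which is the interpretation the lemma already adopts.
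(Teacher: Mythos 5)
Your proposal is correct and is essentially the paper's argument: the paper's proof is literally ``Direct computation,'' and your chain-rule unrolling over the topologically ordered DAG is exactly that computation made explicit. Your closing remark about the operating point of $\bar{y}_p$ (and reading $\lambda_p$ as the stated ratio, consistent with the paper's later assumption that $\partial \bar{y}_p/\partial x_0$ depends only on path length) is the right way to interpret the identification the paper leaves implicit.
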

\begin{proof}
Direct computation.
\end{proof}

From Lemma \ref{lemma:derivative}, we can observe that the contribution of each path to the gradient is weighted by its corresponding architecture edge weights.
Thus, we can define the following distribution  $\rho$ of the receptive field radius:
\begin{equation}
\rho_l=\sum_{p\in\mathcal{P}^l}\lambda_p=\sum_{p\in\mathcal{P}^l}\prod_{\{i,j\}\in\mathcal{E}^{p}}\omega_{ij}\qquad \mathrm{for }\ \ l=2,...,n,
\label{eq:rho}
\end{equation}
where we have assumed that the gradient $\frac{\partial \bar{y}_p}{\partial x_0}$ depends only on the path length, as done in \cite{veit2016residual}. This is a reasonable assumption if all the architecture nodes perform the same operation. The distribution of the receptive field radius is therefore influenced by the architecture edge weights. Figure \ref{fig:path1} shows an example of how such weights can modify the radius distribution. If we consider $\omega_{ij}=1$ for all $i$ and $j$, we obtain that the radius distribution is equal to the path length distribution. In order to provide some insight into the role of parameter $p$ in the distribution of the receptive field radius, we focus on this special case and analyze the distribution of the path lengths in a randomly wired architecture by introducing the following Lemma.

\begin{lemma}
Let us consider a randomly wired network with $L$ architecture nodes, where the architecture DAG is generated according to a Erd\H{o}s-Renyi graph generator with probability $p$. The average length of the paths from node $k$ to the sink is $\mathbb{E}[l^{(k)}]\approx\frac{p}{1+p}(L-k-1)+2$.
\label{lemma:avg_length}
\end{lemma}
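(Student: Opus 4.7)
The plan is to treat the ``average length'' as the ratio of expectations $\mathbb{E}[l^{(k)}] \approx \sum_l l\,\mathbb{E}[N_l^{(k)}] / \mathbb{E}[N^{(k)}]$, which is the standard approximation for $\mathbb{E}[(\sum_l l\, N_l^{(k)})/N^{(k)}]$ and almost certainly the source of the $\approx$ in the statement. Both the numerator and the denominator are already available (or easily derivable) from Lemma~\ref{lemma1}, so the proof should reduce to one binomial-identity calculation.

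First I would recall from Lemma~\ref{lemma1} that $\mathbb{E}[N_l^{(k)}]=\binom{L-k-1}{l-2}p^{l-1}$ and $\mathbb{E}[N^{(k)}]=p(1+p)^{L-k-1}$, and then write the weighted sum
\begin{equation*}
S \;=\; \sum_{l=2}^{L-k+1} l\,\binom{L-k-1}{l-2}p^{l-1}.
\end{equation*}
The natural substitution is $\tilde n = L-k-1$, $\tilde l = l-2$, which turns $S$ into
\begin{equation*}
S \;=\; p\sum_{\tilde l=0}^{\tilde n}(\tilde l+2)\binom{\tilde n}{\tilde l}p^{\tilde l} \;=\; p\sum_{\tilde l=0}^{\tilde n}\tilde l\binom{\tilde n}{\tilde l}p^{\tilde l} \;+\; 2p\sum_{\tilde l=0}^{\tilde n}\binom{\tilde n}{\tilde l}p^{\tilde l},
\end{equation*}
mirroring the manipulation already used inside the proof of Lemma~\ref{lemma1}.

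Next I would evaluate the two sums by the binomial theorem and its derivative: the second is $(1+p)^{\tilde n}$, while the first follows from the standard identity $\sum_{\tilde l}\tilde l\binom{\tilde n}{\tilde l}p^{\tilde l}=\tilde n\,p\,(1+p)^{\tilde n-1}$ (obtained by differentiating $(1+x)^{\tilde n}$ and multiplying by $x=p$). Collecting terms gives $S=p(1+p)^{\tilde n-1}\bigl[\tilde n\,p + 2(1+p)\bigr]$.

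Finally, dividing by $\mathbb{E}[N^{(k)}]=p(1+p)^{\tilde n}$ cancels a factor of $p(1+p)^{\tilde n-1}$ and yields $\mathbb{E}[l^{(k)}] \approx \tfrac{\tilde n\,p}{1+p}+2 = \tfrac{p}{1+p}(L-k-1)+2$, which is the claim. The only real obstacle is conceptual rather than technical: justifying the use of the ratio-of-expectations in place of the genuine expected length of a uniformly chosen path. Since the number of paths concentrates around its mean for moderate $p$ and large $L-k$, this approximation is benign, and I would simply state it up front (as the authors implicitly do with the ``$\approx$'') rather than attempt a sharper analysis.
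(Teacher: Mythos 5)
Your proposal is correct and follows essentially the same route as the paper: both approximate the average length by the ratio $\sum_l l\,\mathbb{E}[N_l^{(k)}]/\mathbb{E}[N^{(k)}]$ (the paper justifies this by neglecting higher-order terms), apply the substitution $\tilde n = L-k-1$, $\tilde l = l-2$, and evaluate the numerator via the binomial theorem and its derivative before dividing by $p(1+p)^{\tilde n}$. Your explicit remark that the ratio-of-expectations step is the source of the ``$\approx$'' is a welcome clarification, but the argument itself matches the paper's.
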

\begin{proof}
From Lemma 3.1, we can compute the average length of the paths from node $k$ to the sink as follows
\begin{equation}
\begin{split}
\mathbb{E}[l^{(k)}]&=\sum_{l=2}^{n-k+1}l\mathbb{E}\left[\frac{N_l^{k}}{N^{(k)}}\right]\approx\frac{\sum_{l=2}^{n-k+1}l\mathbb{E}[N_l^{(k)}]}{\mathbb{E}[N^{(k)}]}\\
&=\frac{\sum_{l=2}^{n-k+1}\binom{n-k-1}{l-2}p^{l-1}l}{p(1+p)^{n-k-1}},
\end{split}
\label{eq:l_m}
\end{equation}
where we have neglected the higher order terms \cite{elandt1980survival}. The numerator in \eqref{eq:l_m} can be computed as follows
\[
\begin{split}
\sum_{l=2}^{n-k+1}\hspace{-5pt}\binom{n-k-1}{l-2}p^{l-1}l &= \sum_{\tilde l=0}^{\tilde n} \binom{\tilde n}{\tilde l}p^{\tilde l+1}(\tilde l+2)\\
&=\sum_{\tilde l=0}^{\tilde n} \binom{\tilde n}{\tilde l}p^{\tilde l+1}\tilde l+2\sum_{\tilde l=0}^{\tilde n} \binom{\tilde n}{\tilde l}p^{\tilde l+1}\\
&=p^2\sum_{\tilde l=0}^{\tilde n} \binom{\tilde n}{\tilde l}p^{\tilde l-1}\tilde l + 2p\sum_{\tilde l=0}^{\tilde n} \binom{\tilde n}{\tilde l}p^{\tilde l}\\
&=p^2\tilde n(1+p)^{\tilde n -1}+2p(1+p)^{\tilde n}\\
&=p^2(n-k-1)(1+p)^{n-k-2}\\
&+2p(1+p)^{n-k-1},
\end{split}
\]
where $\tilde n=n-k-1$, $\tilde l=l-2$ and the fourth equality is obtained differentiating the binomial theorem with respect to $p$ .
Then, we obtain 
\[
\mathbb{E}[l^{(k)}]=\frac{p}{1+p}(n-k-1)+2.
\]
If we consider $k=1$, i.e. the sink, we obtain $\mathbb{E}[l]=\frac{p}{1+p}(n-2)+2$.
\end{proof}

Therefore, if $p=1$ and $\omega_{ij}=1$ for all $i$ and $j$ the radius distribution is a Binomial distribution centered in $\frac{L}{2}+1$ (as in ResNets), instead when $p<1$ the mean of the distribution is lower.  The path length distribution for different $p$ values is shown in Fig. \ref{fig:path2}. This shows that, differently from feedforward networks, the receptive field of ResNets and randomly wired architectures is a combination of receptive fields of varied sizes, where most of the contribution is given by shallow paths, i.e. smaller receptive fields. The parameter $p$ of the randomly wired neural network influences the distribution of the receptive field radius: a lower $p$ value skews the distribution towards shallower paths, instead a higher $p$ value skews the distribution towards longer paths. 

\begin{figure}[t]
    \centering
    \includegraphics[width=0.95\columnwidth]{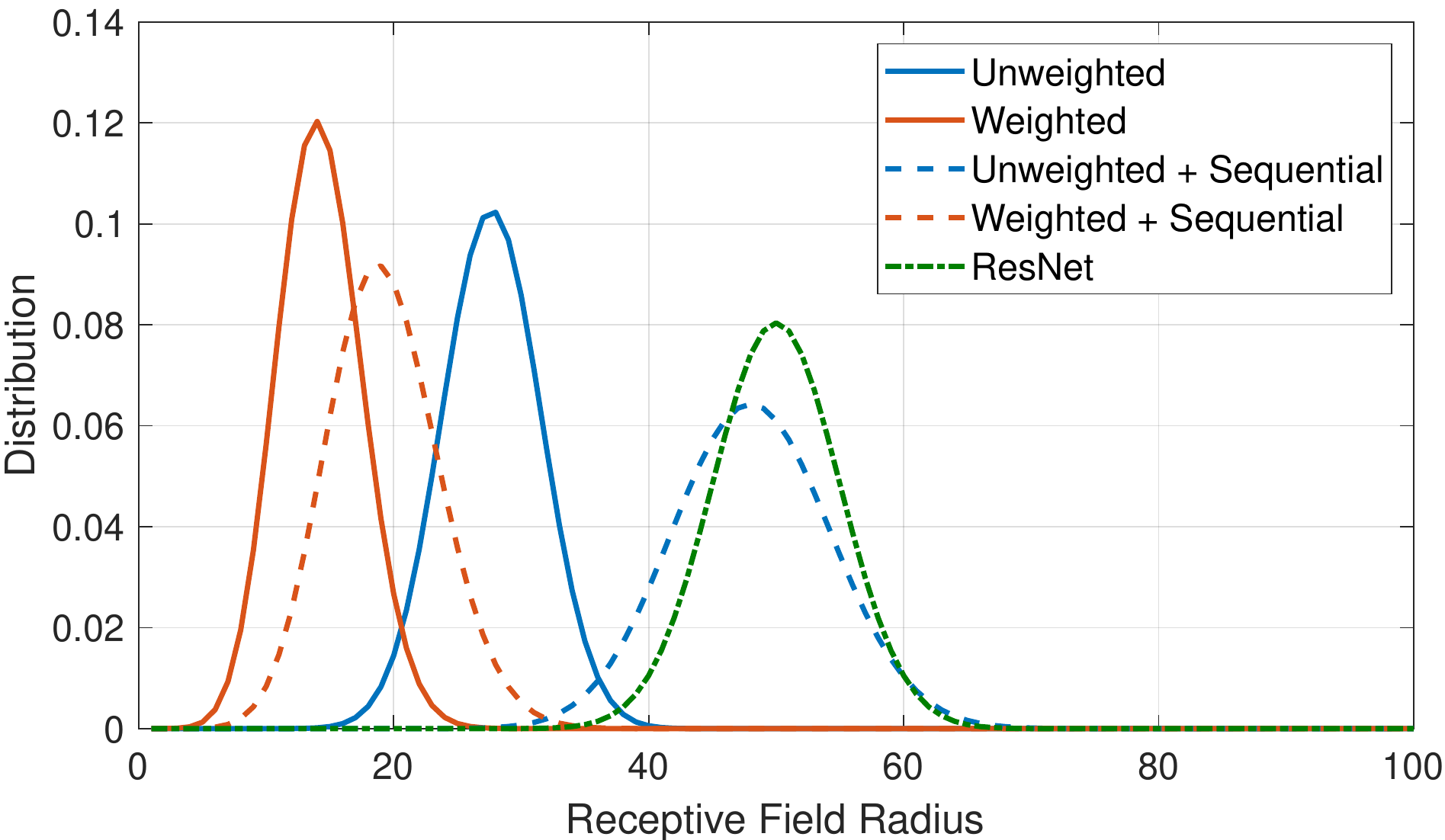}
    \caption{Distribution of receptive field radius ($p=0.4$, $\omega_{ij}=1$ for unweighted, $\omega_{ij}=0.5$ for weighted).}
    \label{fig:path1}
\end{figure}

\begin{figure}[t]
    \centering
    \includegraphics[width=0.95\columnwidth]{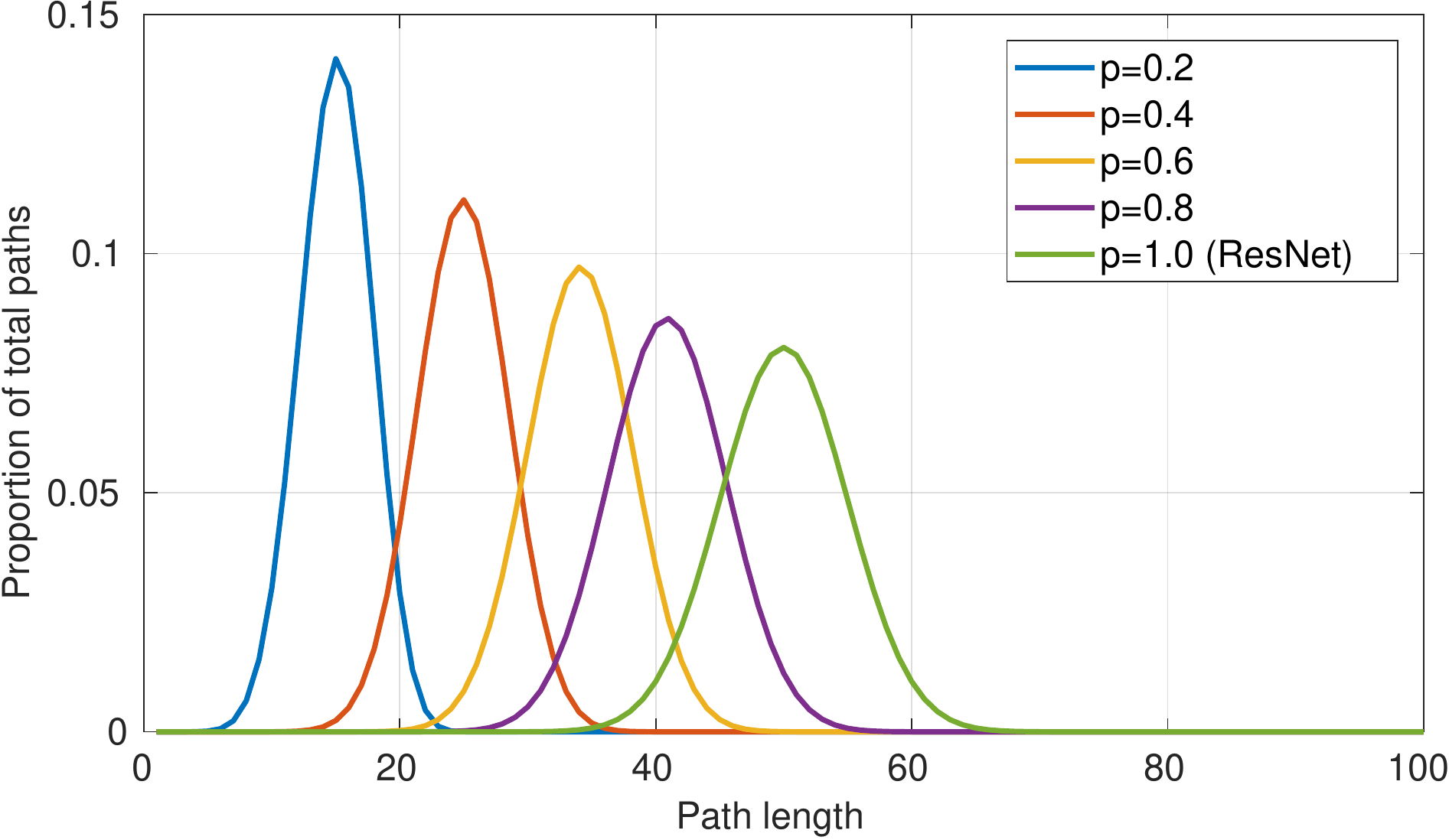}
    \caption{Path distribution as function of architecture edge probability.}
    \label{fig:path2}
\end{figure}

After having considered the special case where $\omega_{ij}=1$ for all $i$ and $j$, we now focus on the general case. Since the edge architecture weights are trainable parameters, they can be adapted to optimize the distribution of the receptive field radius. This is one of the strongest advantages provided by randomly wired architectures with respect to ResNets. This is particularly relevant in the context of GNNs, where we may have a non-uniform growth of the receptive field caused by the irregularity of the graph structure \cite{xu2018representation}. Notice that the randomly wired architecture can be seen as a generalization of the jumping knowledge networks proposed in \cite{xu2018representation}, where all the architecture nodes, not only the last one, merge contributions from previous nodes.
We also remark that, even if we modify the ResNet architecture by adding trainable weights to each branch of the residual module, we cannot retrieve the behaviour of the randomly wired architecture. In fact, the latter has intrinsically more granularity than a ResNet: the expected number of architecture edge weights of a randomly wired network is $\frac{pL(L+1)}{2}$, instead a weighted ResNet has only $2(L-2)$ weights. Ideally, we would like to weigh each path independently (i.e., directly optimizing the value of $\lambda_p$ in Eq. \eqref{eq:path_sum}). However, this is unfeasible because the number of parameters would become excessively high and the randomly wired architecture provides an effective tradeoff. Given an architecture node, weighting in a different way each input edge is important because to each edge corresponds a different length distribution of the paths going through such edge, as shown by the following Lemma. 

\begin{lemma}
Let us consider a randomly wired network with $n$ architecture nodes, where the architecture DAG is generated according to a Erd\H{o}s-Renyi graph generator with probability $p$. Given an edge $\{i,j\}$ between the architecture nodes $i$ and $j$ where $i<j$, the average length of the paths from the source to the sink going through that edge is $\mathbb{E}[l_{ij}]\approx\frac{p}{1+p}(L-(j-i)-3)+4$.
\end{lemma}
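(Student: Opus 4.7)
The plan is to decompose any source-to-sink path that traverses the edge $\{i,j\}$ into three independent pieces: a left subpath inside the induced DAG on $\{1,\ldots,i\}$, the edge $\{i,j\}$ itself, and a right subpath inside the induced DAG on $\{j,\ldots,L\}$. Since all Erd\H{o}s--R\'enyi edges go from lower to higher index and the edge $\{i,j\}$ belongs to neither subgraph, conditioning on its presence leaves the two induced subgraphs as independent Erd\H{o}s--R\'enyi DAGs with the same parameter $p$, on $i$ and $L-j+1$ architecture nodes respectively.

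The key observation is that the length (measured in number of nodes) of a full path through $\{i,j\}$ is exactly the sum of its two subpath lengths: nodes $i$ and $j$ are each counted once on their own side and the edge contributes no new node. Since the set of full paths through $\{i,j\}$ is in bijection with the Cartesian product of the left subpaths ending at $i$ and the right subpaths starting at $j$, the empirical mean length factors additively in every realization as $\bar l_{ij}=\bar l^{(1\to i)}+\bar l^{(j\to L)}$. I would then apply Lemma \ref{lemma:avg_length} on each side. Applied directly to the right subgraph with source $j$ and sink $L$, it yields $\mathbb{E}[\bar l^{(j\to L)}]\approx\frac{p}{1+p}(L-j-1)+2$. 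For the left side, reversing the edge orientations on $\{1,\ldots,i\}$ and relabeling $k\mapsto i-k+1$ turns the strictly upper triangular Bernoulli$(p)$ adjacency matrix into another strictly upper triangular Bernoulli$(p)$ matrix, so paths from $1$ to $i$ have the same distribution as paths from the source to the sink in an Erd\H{o}s--R\'enyi DAG on $i$ nodes. Lemma \ref{lemma:avg_length} with $L$ replaced by $i$ and $k=1$ therefore yields $\mathbb{E}[\bar l^{(1\to i)}]\approx\frac{p}{1+p}(i-2)+2$. Summing, $\mathbb{E}[l_{ij}]\approx\frac{p}{1+p}(i-2+L-j-1)+4=\frac{p}{1+p}(L-(j-i)-3)+4$.

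The main obstacle is the same ratio-of-random-variables approximation already used in the proof of Lemma \ref{lemma:avg_length}, namely replacing $\mathbb{E}[N_l/N]$ by $\mathbb{E}[N_l]/\mathbb{E}[N]$ and dropping higher-order terms. Here it must be invoked on each subgraph separately, but because the two subgraphs are independent once we condition on the edge $\{i,j\}$ being present, the error terms are of the same order as in the original lemma and linearity of expectation cleanly delivers the additive decomposition; this approximation is the sole source of the ``$\approx$'' in the final statement.
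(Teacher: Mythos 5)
Your proposal is correct and follows essentially the same route as the paper, whose one-line proof writes $\mathbb{E}[l_{ij}]=\mathbb{E}[l^{(n-i+1)}+l^{(j)}]$ and invokes Lemma \ref{lemma:avg_length} on each term --- i.e., exactly your decomposition into a source-to-$i$ subpath (handled by reversing/relabeling the DAG, which is what the index $n-i+1$ encodes) plus a $j$-to-sink subpath. You merely make explicit the details the paper leaves implicit: the additivity of node-counted lengths, the Cartesian-product structure of paths through the edge, the independence of the two induced subgraphs, and the same ratio-of-expectations approximation.
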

\begin{proof}
From Lemma 3.3, we can compute the average length of the paths going through the edge $\{i,j\}$ as follows
\[
\mathbb{E}[l_{ij}]=\mathbb{E}[l^{n-i+1}+l^{j}]\approx\frac{p}{1+p}(n-(j-i)-3)+4.
\]
\end{proof}

\subsection{Sequential path}
In the previous sections we have shown that a randomly wired architecture behaves like an ensemble of paths merging contribution from receptive fields of varied size, where most of the contribution is provided by shallow paths. As discussed previously, this provides numerous advantages with respect to feedforward networks and ResNets. However, some graph-based tasks may actually benefit from a larger receptive field \cite{li2019deepgcns}, so it is interesting to provide randomly wired architectures with mechanisms to directly promote longer paths. Differently from ResNets, in a randomly wired neural network with $L$ architecture nodes the longest path may be shorter than $L$, leading to a smaller receptive field. In order to overcome this issue, we propose to modify the generation process of the random architecture by imposing that it should also include the sequential path, i.e., the path traversing all architecture nodes. This design of the architecture skews the initial path length distribution towards longer paths, which has the effect of promoting their usage. Nevertheless, the trainable architecture edge weights will ultimately define the importance of such contribution. Fig. \ref{fig:path1} shows an example of how including the sequential path changes the distribution of the receptive field radius.

\subsection{MonteCarlo DropPath regularization} \label{sec:droppath}

The randomly wired architecture offers new degrees of freedom to introduce regularization techniques. In particular, one could delete a few architecture edges during training with probability $p_\text{drop}$ as a way to avoid co-adaptation of architecture nodes. This is reminiscent of DropOut \cite{srivastava2014dropout} and DropConnect \cite{wan2013regularization}, although it is carried out at a higher level of abstraction, i.e., connections between ``layers'' instead of neurons. It is also reminiscent of techniques used in Neural Architecture Search \cite{zoph2018learning} and the approach used in ImageNet experiments in \cite{xie2019exploring}, although implementation details are unclear for the latter. 

We propose to use a MonteCarlo approach where paths are also dropped in testing. Inference is performed multiple times for different realizations of dropped architecture edges and results are averaged. This allows to sample from the full predictive distribution induced by DropPath, as in MonteCarlo DropOut \cite{gal2015dropout}.
The following lemma shows that MonteCarlo DropPath decorrelates the contributions of paths in Eq. \eqref{eq:path_sum} even if they share architecture edges, thus allowing finer control over the modulation of the receptive field radius.
\begin{lemma}
Let us consider two distinct paths $p_1$ and $p_2$ of a randomly wired network where the edges of the paths can be deleted with probability $p_{\text{drop}}$. Then, even if the two paths share some architecture edges, their contributions to the derivative $\frac{\partial y}{\partial x_0}$, as
defined in Lemma \ref{lemma:derivative}, are decorrelated. \end{lemma}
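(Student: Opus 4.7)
The plan is to model MonteCarlo DropPath by attaching an independent Bernoulli indicator $X_e \sim \mathrm{Bernoulli}(1 - p_{\text{drop}})$ to each architecture edge $e$, so that the random contribution of path $p$ in Eq.~\eqref{eq:path_sum} factorises as $C_p = M_p \, \lambda_p \frac{\partial \bar{y}_p}{\partial x_0}$, where $M_p = \prod_{e \in \mathcal{E}^p} X_e$ is the survival indicator of the path and the remaining factor is deterministic given the trained weights $\omega_{ij}$. Because all the randomness under MonteCarlo DropPath is concentrated in the $X_e$, the decorrelation claim reduces to controlling the covariance between the products $M_{p_1}$ and $M_{p_2}$.

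First I would directly compute $\mathbb{E}[M_p] = (1-p_{\text{drop}})^{|\mathcal{E}^p|}$ and, using independence across distinct edges together with the Bernoulli idempotence $X_e^2 = X_e$ on each shared edge, obtain the compact identity $\mathbb{E}[M_{p_1} M_{p_2}] = (1-p_{\text{drop}})^{|\mathcal{E}^{p_1} \cup \mathcal{E}^{p_2}|}$. I would then adopt the inverted-dropout rescaling $\tilde X_e = X_e/(1-p_{\text{drop}})$ so that $\tilde C_p$ becomes an unbiased estimator of the no-dropout contribution $\lambda_p \frac{\partial \bar y_p}{\partial x_0}$. Finally, invoking the MonteCarlo part of the scheme, I would form the empirical average $\hat C_p = \frac{1}{K}\sum_{k=1}^K \tilde M_p^{(k)} \lambda_p \frac{\partial \bar y_p}{\partial x_0}$ over $K$ independent mask draws, and show that $\mathrm{Cov}(\hat C_{p_1}, \hat C_{p_2}) = O(1/K)$, so the two contributions become asymptotically uncorrelated even when $\mathcal{E}^{p_1} \cap \mathcal{E}^{p_2} \neq \emptyset$. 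The per-edge independence across distinct edges makes the only source of coupling come from $\mathcal{E}^{p_1} \cap \mathcal{E}^{p_2}$, and the MonteCarlo average is precisely what washes out that coupling.

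The main obstacle is pinning down the exact sense in which ``decorrelated'' must be read, since at the level of a single dropout realization the shared factors $X_e$ on $\mathcal{E}^{p_1} \cap \mathcal{E}^{p_2}$ induce a strictly positive covariance of the form $(1-p_{\text{drop}})^{-|\mathcal{E}^{p_1} \cap \mathcal{E}^{p_2}|} - 1$ between the rescaled masks $\tilde M_{p_1}$ and $\tilde M_{p_2}$. The lemma must therefore be anchored either on the MonteCarlo limit (averaging over many independent masks) or on a first-order expansion in $p_{\text{drop}}$ that kills the overlap terms; once that interpretive choice is fixed, the remainder of the argument is elementary bookkeeping of expectations of products of independent Bernoullis, and the result for non-overlapping paths drops out as the trivial special case $|\mathcal{E}^{p_1} \cap \mathcal{E}^{p_2}| = 0$.
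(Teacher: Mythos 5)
Your setup is exactly the paper's: attach independent masks $z_{ij}\sim\mathrm{Bernoulli}(1-p_{\text{drop}})$ to the architecture edges, keep the $\omega_{ij}$ deterministic, and compare $\mathbb{E}[\lambda_{p_1}\lambda_{p_2}]$ with $\mathbb{E}[\lambda_{p_1}]\,\mathbb{E}[\lambda_{p_2}]$ for $\lambda_p=\prod_{\{i,j\}\in\mathcal{E}^p}z_{ij}\omega_{ij}$. Your moment bookkeeping is also correct: by idempotence $z_{ij}^2=z_{ij}$ on the shared edges, the cross moment carries $(1-p_{\text{drop}})^{|\mathcal{E}^{p_1}\cup\mathcal{E}^{p_2}|}$ while the product of the means carries $(1-p_{\text{drop}})^{|\mathcal{E}^{p_1}|+|\mathcal{E}^{p_2}|}$, so for a single mask realization two overlapping paths have strictly positive covariance. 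The paper's proof performs this same covariance computation but concludes it equals zero; tracing its displayed equality, the cancellation goes through only if the expectation of $\prod_{\{i,j\}\in\mathcal{I}(p_1,p_2)}z_{ij}^2$ is replaced by $\prod_{\{i,j\}\in\mathcal{I}(p_1,p_2)}(\mathbb{E}[z_{ij}])^2$, i.e., if the masks seen by the two paths on their common edges are treated as independent copies (equivalently, as variance-free). So the point at which you diverge from the paper is not a missing idea on your part: you have correctly identified the one step on which the claim of exact decorrelation rests, and you decline to take it.

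Where your proposal does fall short is in the attempted rescue, which does not recover the lemma in the sense the paper asserts (zero covariance of the per-path contributions to $\partial y/\partial x_0$). In the MonteCarlo average over $K$ mask draws, both paths are evaluated under the \emph{same} mask within each draw, so $\mathrm{Cov}(\hat C_{p_1},\hat C_{p_2})=\tfrac{1}{K}\mathrm{Cov}(\tilde C_{p_1},\tilde C_{p_2})$ while each variance also shrinks by the factor $1/K$: the correlation coefficient is unchanged, and ``asymptotically uncorrelated'' holds only in the weak sense in which every second moment vanishes. Similarly, the first-order expansion in $p_{\text{drop}}$ makes the covariance $O(p_{\text{drop}})$, but the variances are $O(p_{\text{drop}})$ too, leaving a correlation of roughly $|\mathcal{I}(p_1,p_2)|/\sqrt{|\mathcal{E}^{p_1}|\,|\mathcal{E}^{p_2}|}$, which is not small. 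To match the paper's conclusion you would have to state explicitly the independence (or zero-variance) assumption on shared-edge masks that its computation implicitly uses; otherwise the honest statement obtainable from your (correct) moments is a weaker one, namely that DropPath removes the deterministic coupling only up to a residual covariance $\prod_{\{i,j\}\in\mathcal{E}^{p_1}}\omega_{ij}\prod_{\{i,j\}\in\mathcal{E}^{p_2}}\omega_{ij}\left[(1-p_{\text{drop}})^{|\mathcal{E}^{p_1}\cup\mathcal{E}^{p_2}|}-(1-p_{\text{drop}})^{|\mathcal{E}^{p_1}|+|\mathcal{E}^{p_2}|}\right]$, which vanishes exactly only when the paths are edge-disjoint.
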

\begin{proof}
Let us consider two paths $p_1$ and $p_2$ with at least one common edge, we can compute the covariance between these two paths as follows:
\begin{align*}
&\mathrm{Cov}(\lambda_{p_1},\lambda_{p_2})=\mathbb{E}[\lambda_{p_1}\lambda_{p_2}]-\mathbb{E}[\lambda_{p_1}]\mathbb{E}[\lambda_{p_2}]\\
&=\prod_{\{i,j\}\in \mathcal{E}^{p_1}}\hspace{-5pt} \omega_{ij}\prod_{\{i,j\}\in \mathcal{E}^{p_2}}\hspace{-5pt}\omega_{ij} \mathbb{E}\Bigg[\prod_{\{i,j\}\in \mathcal{I}(p_1,p_2)}\hspace{-3pt}z_{ij}^2 \prod_{\{i,j\}\in \mathcal{D}(p_1,p_2)}\hspace{-5pt}z_{ij}\Bigg]\\
&-\prod_{\{i,j\}\in \mathcal{E}^{p_1}}\hspace{-5pt} \omega_{ij}\prod_{\{i,j\}\in \mathcal{E}^{p_2}} \hspace{-5pt}\omega_{ij} \mathbb{E}\hspace{-3pt}\left[\prod_{\{i,j\}\in \mathcal{E}^{p_1}}\hspace{-5pt}z_{ij}\right]\mathbb{E}\hspace{-3pt}\left[\prod_{\{i,j\}\in \mathcal{E}^{p_2}}\hspace{-5pt}z_{ij}\right]\hspace{-5pt}=0,
\end{align*}

where  $\mathcal{I}(p_1,p_2)=\mathcal{E}^{p_1}\cap \mathcal{E}^{p_2}$, $\mathcal{D}(p_1,p_2)=(\mathcal{E}^{p_1}\cup \mathcal{E}^{p_2})-(\mathcal{E}^{p_1}\cap \mathcal{E}^{p_2})$, $\lambda_{p_1}=\prod_{\{i,j\}\in p_1} z_{ij}\omega_{ij}$, $\lambda_{p_2}=\prod_{\{i,j\}\in p_2} z_{ij}\omega_{ij}$, $z_{ij}\sim\mathrm{Bernoulli}(1-p_\text{drop})$, and we have assumed all $\omega_{ij}$ deterministic.
\end{proof}

\section{Experimental results}

Experimental evaluation of GNNs is a topic that has recently received great attention. The emerging consensus is that benchmarking methods routinely used in past literature are inadequate and lack reproducibility. In particular, \cite{vignac2020choice} showed that commonly used citation network datasets like CORA, CITESEER, PUBMED are too simple and skew results towards simpler architectures or even promote ignoring the underlying graph. TU datasets are also recognized to be too small \cite{errica2019fair} and the high variability across splits does not allow for sound comparisons across methods. In order to evaluate the gains offered by randomly wired architectures across, we adopt recently proposed benchmarking frameworks such as the one in \cite{dwivedi2020benchmarking} and Open Graph Benchmarks \cite{hu2020ogb}.

First, we use two datasets in \cite{dwivedi2020benchmarking} to analyse the performance differences between the baseline ResNet architecture, i.e., a feedforward architecture with skip connections after every layer, and the randomly wired architecture. We omit results on architectures without skip connections as these have already been shown to have poorer performance \cite{dwivedi2020benchmarking}.
We focus on the ZINC and CIFAR10 datasets. ZINC is one of the most popular real-world molecular datasets, and considers the task of property regression (constrained solubility) for the molecules represented as graphs. CIFAR10 is a well-known dataset for image classification, and, in this context, images are described by graphs of superpixels. 
For this experiment, we test four of the most commonly used graph convolution definitions: GCN \cite{kipf2016semi}, GIN \cite{xu2018powerful}\footnote{GIN and RAN-GIN compute the output as in \cite{xu2018representation}, using all architecture nodes.}, Gated GCN \cite{bresson2017residual}, and GraphSage \cite{hamilton2017inductive}. Notice that we do not attempt to optimize a specific method, nor we are interested in comparing one graph convolution to another. A fair comparison is ensured by running both methods with the same number of trainable parameters and with the same hyperparameters, keeping exactly the same ones used in \cite{dwivedi2020benchmarking}. The learning rate of both methods is adaptively decayed between $10^{-3}$ and $10^{-5}$ and the stopping criterion is validation loss not improving for 5 epochs after reaching the minimum learning rate. Results are averaged over 4 runs with different weight initialization and different random architecture graphs, drawn with $p=0.6$. The random architectures use sequential paths (Sec. \ref{sec:receptive}), but no DropPath (Sec. \ref{sec:droppath}) for the ZINC experiment, and DropPath but no sequential paths for CIFAR10.

The results in Tables \ref{table:ZINC} and \ref{table:CIFAR} show the performance achieved by randomly wired architectures and their ResNets counterparts for increasing model capacity (number of architecture nodes or layers $L$). We can notice that randomly wired GNNs have compelling performance in many regards. The superscript reports the standard deviation among runs and the subscript reports the level of significance by measuring how many baseline standard deviations the average value of the random architecture deviates from the average value of the baseline. Results are in bold if they are at least $1\sigma$ significant. First of all, randomly wired GNNs typically provide lower error or higher accuracy than their ResNet counterparts for the same number of parameters. Moreover, they are more effective at increasing capacity than stacking layers: while they are essentially equivalent to ResNets for very short networks (e.g., for $L=4$), they enable larger gains when additional layers are introduced. This is highlighted by Table \ref{table:gain}, which shows the relative improvement in mean absolute error or accuracy averaged over all the graph convolution definitions, with respect to the short 4-layer network, where random wiring and ResNets are almost equivalent. This table highlights that deeper ResNets always provide smaller gains with respect to their shallow counterpart than the randomly wired GNNs. This allows us to conclude that randomly wired GNNs are a more effective way of increasing model capacity.

\begin{table}
\caption{ZINC Mean Absolute Error.} \label{table:ZINC}
\setlength\tabcolsep{1.5pt} 
\renewcommand{\arraystretch}{1.5}
\centering
\begin{tabular}{ccccc}
     & $L=4$ & $L=8$ & $L=16$ & $L=32$ \\ \hline
GCN & $\textbf{0.469}^{\pm 0.002}_{2.9\sigma}$ & $0.465^{\pm 0.012}$ & $0.445^{\pm 0.022}$ & $0.426^{\pm 0.011}$ \\
\textbf{RAN-GCN} & $0.509^{\pm 0.015}$ & $\textbf{0.447}^{\pm 0.019}_{1.5\sigma}$ & $\textbf{0.398}^{\pm 0.015}_{2.1\sigma}$ & $\textbf{0.385}^{\pm 0.015}_{3.7\sigma}$\\ \hline
GIN & $0.375^{\pm 0.014}_{0.4\sigma}$ & $0.444^{\pm 0.017}$ & $0.461^{\pm 0.022}$ & $0.633^{\pm 0.089}$\\
\textbf{RAN-GIN} & $0.381^{\pm 0.021}$ & $\textbf{0.398}^{\pm 0.004}_{2.7\sigma}$ & $\textbf{0.426}^{\pm 0.020}_{1.6\sigma}$ & $\textbf{0.540}^{\pm 0.155}_{1.0\sigma}$ \\ \hline
GatedGCN & $0.368^{\pm 0.007}$ & $0.339^{\pm 0.027}$ & $0.284^{\pm 0.014}$ & $0.277^{\pm 0.025}$\\
\textbf{RAN-GatedGCN} & $0.364^{\pm 0.007}_{0.5\sigma}$ & $\textbf{0.310}^{\pm 0.010}_{1.1\sigma}$ & $\textbf{0.218}^{\pm 0.017}_{4.7\sigma}$ & $\textbf{0.215}^{\pm 0.025}_{2.5\sigma}$ \\ \hline
GraphSage & $0.428^{\pm 0.007}_{0.1\sigma}$ & $0.363^{\pm 0.005}$ & $0.355^{\pm 0.003}$ & $0.351^{\pm 0.009}$\\
\textbf{RAN-GraphSage} & $0.429^{\pm 0.010}$ & $\textbf{0.368}^{\pm 0.015}_{1.0\sigma}$ & $\textbf{0.340}^{\pm 0.009}_{5.0\sigma}$ & $\textbf{0.333}^{\pm 0.008}_{2.0\sigma}$ \\ \hline
\end{tabular}
\end{table}

\begin{table}[]
\caption{CIFAR10 Accuracy.} \label{table:CIFAR}
\setlength\tabcolsep{2pt} 
\renewcommand{\arraystretch}{1.5}
\centering
\begin{tabular}{ccccc}
     & $L=4$ & $L=8$ & $L=16$ & $L=32$ \\ \hline
GCN & $54.28^{\pm 0.35}$ & $54.85^{\pm 0.20}$ & $54.74^{\pm 0.52}$ & $54.76^{\pm 0.53}$ \\
\textbf{RAN-GCN} & $\textbf{55.31}^{\pm 0.25}_{2.9\sigma}$ & $\textbf{57.81}^{\pm 0.08}_{14.8\sigma}$ & $\textbf{57.29}^{\pm 0.44}_{4.9\sigma}$ & $\textbf{58.49}^{\pm 0.21}_{7.0\sigma}$ \\ \hline
GIN & $\textbf{70.66}^{\pm 0.78}_{2.94\sigma}$ &  $66.67^{\pm 0.73}$ & $63.99^{\pm 1.45}$ & $58.18^{\pm 2.92}$\\
\textbf{RAN-GIN}& $67.48^{\pm 1.08}$ & $\textbf{67.36}^{\pm 0.70}_{1.0\sigma}$ & $\textbf{67.25}^{\pm 0.74}_{2.2\sigma}$ & $\textbf{62.73}^{\pm 1.57}_{1.6\sigma}$ \\ \hline
GatedGCN & $\textbf{69.26}^{\pm 0.36}_{2.0\sigma}$ & $68.27^{\pm 0.80}$ & $69.16^{\pm 0.66}$ & $69.46^{\pm 0.47}$\\
\textbf{RAN-GatedGCN} & $68.55^{\pm 0.03}$ & $68.86^{\pm 1.64}_{0.7\sigma}$ & $\textbf{72.00}^{\pm 0.44}_{4.3\sigma}$ & $\textbf{73.50}^{\pm 0.68}_{8.6\sigma}$ \\ \hline
GraphSage & $\textbf{66.14}^{\pm 0.21}_{2.4\sigma}$ & $65.58^{\pm 0.46}_{0.6\sigma}$ & $66.12^{\pm 0.11}_{0.0\sigma}$ & $65.33^{\pm 0.34}$\\
\textbf{RAN-GraphSage} & $65.02^{\pm 0.47}$ & $65.31^{\pm 0.38}$ & $66.10^{\pm 1.11}$ & $\textbf{67.68}^{\pm 0.37}_{6.9\sigma}$ \\ \hline
\end{tabular}
\end{table}

\begin{table}
\caption{Median Relative Gain over $L=4$.} \label{table:gain}
\setlength\tabcolsep{3pt} 
\renewcommand{\arraystretch}{1.5}
\centering
\begin{tabular}{ccccc}
                        &        & $L=8$                & $L=16$                & $L=32$                \\ \hline
\multirow{2}{*}{ZINC}   & ResNet & $-0.50\%$            & $+0.37\%$            & $+3.24\%$            \\ \cline{2-5}
                        & \textbf{Random} & $\textbf{+5.43\%}$  & $\textbf{+8.89\%}$   & $\textbf{+11.36\%}$   \\ \hline
\multirow{2}{*}{CIFAR10}& ResNet & $-1.14\%$            & $-0.08\%$            & $-0.47\%$            \\ \cline{2-5}
                        & \textbf{Random} & $\textbf{+0.45\%}$  & $\textbf{+2.62\%}$   & $\textbf{+4.92\%}$   \\ \hline         
\end{tabular}
\end{table}

\begin{table}[t]
\centering
\caption{\textrm{ogbg-molpcba} Average Precision.} 
\label{table:molpcba}
\setlength\tabcolsep{2pt} 
\renewcommand{\arraystretch}{1.5}
\begin{tabular}{cccc}
                         & Test AP               & Val AP                & No. params  \\ \hline
GCN                      & $0.2020^{\pm 0.0024}$ & $0.2059^{\pm 0.0033}$ & $565,928$   \\ \hline
GIN                      & $0.2266^{\pm 0.0028}$ & $0.2305^{\pm 0.0027}$ & $1,923,433$ \\ \hline
ChebNet                  & $0.2306^{\pm 0.0016}$ & $0.2372^{\pm 0.0018}$ & $1,475,003$ \\ \hline
SIGN                     & $0.2047^{\pm 0.0036}$ & $0.2163^{\pm 0.0022}$ & $5,516,228$ \\ \hline
\textbf{RAN-GIN}         & $0.2493^{\pm 0.0076}$ & $0.2514^{\pm 0.0093}$ & $1,868,774$ \\ \hline
GCN+VN+FLAG              & $0.2384^{\pm 0.0037}$ & $0.2556^{\pm 0.0040}$ & $2,017,028$ \\ \hline
GIN+VN+FLAG              & $0.2834^{\pm 0.0038}$ & $0.2912^{\pm 0.0026}$ & $3,374,533$ \\ \hline
DeeperGCN+VN+FLAG        & $0.2842^{\pm 0.0043}$ & $0.2952^{\pm 0.0029}$ & $5,550,208$ \\ \hline
\textbf{RAN-GIN+VN+FLAG} & $0.2879^{\pm 0.0048}$ & $\textbf{0.3041}^{\pm 0.0031}$ & $5,572,026$ \\ \hline
GINE++VN                 & $\textbf{0.2917}^{\pm 0.0015}$ & $\textbf{0.3065}^{\pm 0.0030}$ & $6,147,029$ \\ \hline
\end{tabular}
\end{table}

Moreover, we compare the proposed method against other state-of-the-art techniques to build graph neural networks, including methods that address the oversmoothing problem to build deeper GNNs (DeeperGCN) \cite{li2020deepergcn} or argue that going wide instead of deep is more effective to increase the capacity (SIGN) \cite{rossi2020sign}. This experiment is done on the ogbg-molpcba dataset from Open Graph Benchmarks \cite{hu2020ogb} and results are taken from the public leaderboard. We use a randomly wired version of GIN and compare results with two different setups: a vanilla RAN-GIN with the same number of parameters as GIN, and a larger RAN-GIN using the virtual node trick \cite{gilmer2017neural} and FLAG augmentations \cite{kong2020flag}. Both versions additionally use DropPath with $p_{drop}=0.01$. The results are reported in Table \ref{table:molpcba}. We can see that RAN-GIN (12 layers) significantly outperforms GIN and a number of other techniques for a comparable number of parameters. Furthermore, RAN-GIN with virtual node and FLAG augmentations reaches state-of-the-art performance on this benchmark, outperforming DeeperGCN and being very close to the recently proposed GINE \cite{brossard2020graph}\footnote{Notice that we did not test RAN-GINE.}.

\section{Ablation experiments}

In this section, we explore how some of the design choices for randomly wired GNNs can affect model performance.

\subsection{Architecture Edge probability}

We first investigate the impact of the probability $p$ of drawing an edge in the random architecture. Table \ref{table:p_ablation} shows the results for a basic random architecture without DropPath nor embedded sequential path. It appears that an optimal value of $p$ exists that maximizes performance. This could be explained by a tradeoff between size of receptive field and the ability to modulate it.

\begin{table}[t]
    \centering
    \caption{Edge probability, $L=16$, RAN-GCN.}
    \renewcommand{\arraystretch}{1.5}
    \begin{tabular}{ccccc}
         & $p=0.2$ & $p=0.4$ & $p=0.6$ & $p=0.8$ \\ \hline
        ZINC & $0.440^{\pm 0.025}$ & $0.427^{\pm 0.025}$ & $\textbf{0.409}^{\pm 0.010}$ & $0.415^{\pm 0.012}$ \\
        CIFAR10 & $56.53^{\pm 0.61}$ & $56.21^{\pm 0.48}$ & $\textbf{57.44}^{\pm 0.46}$ & $56.06^{\pm 0.48}$ \\ \hline
    \end{tabular}
    \label{table:p_ablation}
    \vspace{-1pt}
\end{table}

\begin{table}[t]
    \centering
    \caption{DropPath on CIFAR10, RAN-GatedGCN. No sequential path embedding.} \label{table:droppath_ablation}
\setlength\tabcolsep{2pt} 
\renewcommand{\arraystretch}{1.5}
\centering
\begin{tabular}{ccccc}
         & $L=8$ & $L=16$ & $L=32$ \\ \hline
        None & $68.07^{\pm 0.94}$ & $70.78^{\pm 0.38}$ & $72.75^{\pm 0.37}$\\
        DropPath &$\textbf{68.86}^{\pm 1.64}$ & $\textbf{72.00}^{\pm 0.44}$ & $\textbf{73.50}^{\pm 0.68}$ \\ \hline
    \end{tabular}
\end{table}
    
\begin{table}[]
    \centering
    \caption{DropPath on CIFAR10, RAN-GatedGCN. No sequential path embedding.} \label{table:pdrop_ablation}
\setlength\tabcolsep{2pt} 
\renewcommand{\arraystretch}{1.5}
\begin{tabular}{ccccc}
          \multicolumn{5}{c}{$p_\text{drop}$}                                                                             \\
          0                  & 0.005              & 0.01                        & 0.02               & 0.03               \\ \hline
 $70.78^{\pm 0.38}$ & $70.90^{\pm 0.46}$ & $\textbf{72.00}^{\pm 0.44}$ & $71.55^{\pm 0.83}$ & $71.09^{\pm 1.79}$ \\ \hline
\end{tabular}

\end{table}

\begin{table}[t!]
\centering
\caption{Sequential path embedding on ZINC, RAN-GatedGCN. No DropPath.} \label{table:linear_ablation}
\setlength\tabcolsep{2pt} 
\renewcommand{\arraystretch}{1.5}
\vspace{-11pt}
\centering
\small
\begin{tabular}{ccccc}
         & $L=8$ & $L=16$ & $L=32$ \\ \hline
        Fully random & $0.332^{\pm 0.027}$ & $0.264^{\pm 0.029}$ & $0.234^{\pm 0.030}$ \\
        Random+Sequential & $\textbf{0.310}^{\pm 0.010}$ & $\textbf{0.218}^{\pm 0.017}$ & $\textbf{0.215}^{\pm 0.025}$ \\ \hline
    \end{tabular}
\end{table}

\subsection{DropPath}

The impact of DropPath on CIFAR10 is shown in Table \ref{table:droppath_ablation}. We found the improvement due to DropPath to be increasingly significant for a higher number of architecture nodes, as expected due to the increased number of edges. The value of the drop probability $p_\text{drop}=0.01$ was not extensively cross-validated. However, Table \ref{table:pdrop_ablation} shows that higher drop rates typically lowered performance.

\subsection{Embedded sequential path}

The impact of embedding a sequential path as explained in Sec. \ref{sec:receptive} is shown in Table \ref{table:linear_ablation}. It can be observed that its effect of promoting receptive fields with larger radius is useful on this task for any number of architecture nodes. We remark that, while we do not report results for the sake of brevity, this is not always the case and some tasks (e.g., CIFAR10) do not benefit from promoting larger receptive fields.

\section{Conclusion}

We showed how randomly wired architectures can boost the performance of GNNs by merging receptive fields of multiple size. Consistent and statistically significant improvements over a wide range of tasks and graph convolutions highlight how such constructions are more effective at increasing model capacity than building deep GNN by stacking several layers in a linear fashion, even when residual connections are used.

\bibliographystyle{IEEEtran}
\bibliography{biblio}

\end{document}